\long\def\icml#1{}
\long\def\nips#1{}
\long\def\arxiv#1{#1}
\newtheorem{theorem}{Theorem}
\newtheorem{lemma}{Lemma}
\newtheorem{corollary}{Corollary}
\newcommand{\pder}[2]{\frac{\partial #1}{\partial #2}}
\newcommand{\wt}{\widetilde}
\newcommand{\R}{\mathbb{R}}
\newcommand{\M}{\mathcal{M}}
\title{Depth creates no more spurious local minima}
\author{Li Zhang\vspace*{0.1in}\\
Google Research\\
\texttt{liqzhang@google.com}
}}
\begin{document}

\icml{

\twocolumn[\icmltitle{Depth creates no more spurious local minima}



\icmlsetsymbol{equal}{*}

\begin{icmlauthorlist}
\icmlauthor{Li Zhang}{google}
\end{icmlauthorlist}

\icmlaffiliation{google}{Google Inc}

\icmlkeywords{Optimization, local minimum}

\vskip 0.3in
]}

\arxiv{
\title{Depth creates no more spurious local minima}
\author{Li Zhang\vspace*{0.1in}\\
Google Research\\
liqzhang@google.com}
\maketitle
}

\nips{
\maketitle
}




\begin{abstract}
We show that for any convex differentiable loss, a deep
linear network has no spurious local minima as long as it is true for the two
layer case. This reduction greatly simplifies the study on the existence of
spurious local minima in deep linear networks. When
applied to the quadratic loss, our result
immediately implies the powerful result by Kawaguchi~\cite{kawaguchi2016}. Further, with the work in~\cite{zhou2018}, we can remove all the assumptions in~\cite{kawaguchi2016}.
This property holds for more general ``multi-tower'' linear networks too.
Our proof builds on~\cite{laurent2018} and develops a new perturbation argument
to show that any spurious local minimum must have full rank, a structural
property which can be useful more generally.
\end{abstract}

\section{Introduction}\label{sec:intro}

One major mystery in deep learning is that deep neural networks do not seem
to suffer from spurious local minima. Understanding this mystery
has become one of the most important topics in the machine learning theory.
We usually attribute the existence of spurious local minima to two factors:
the non-linearity in activation and the large depth of the network. 
In this paper, we show that depth does
not create more spurious local minima in linear networks. More precisely,
we show that for any convex differentiable loss function, any spurious
local minima in a deep linear network should already be present in a two
layer linear network\footnote{Here depth is defined as the number of matrices in the parameter.
Two layer network is the same as one hidden-layer network in some literatures.}.
Such reduction greatly simplifies the study on
the important question about the existence of spurious local minima in deep
linear networks. When applied to the quadratic loss, it leads to the first
unconditional proof that there is no local minima in deep linear networks.
In addition, our proof reveals a non-degeneracy property of spurious local
minima in linear networks, i.e. any spurious local minimum has to
have full rank. This property can also be useful for analyzing two layer networks.

Baldi and
Hornik~\cite{baldi1989} started the investigation on the existence of spurious local minima in linear networks. They showed that, under mild assumptions, for
quadratic loss, two layer linear networks do not have
spurious local minima. They also conjectured it is true for deep linear
networks. This conjecture is only proved almost thirty years later by Kawaguchi~\cite{kawaguchi2016}.
For the special case of linear residual networks, Hardt and Ma~\cite{hardt2017} showed that
there are no spurious local minima through a simpler argument.

While most existing work have been on quadratic loss functions,
Laurent and von~Brecht~\cite{laurent2018} showed a surprisingly
general result for any convex differentiable loss. In~\cite{laurent2018},
the authors consider the special linear networks which have no bottlenecks,
i.e. when the narrowest layer is on the either end. They showed that for any convex differential
loss function, there is no spurious local minima in such networks.
In addition to its generality, the proof in \cite{laurent2018} is quite
intuitive through a novel perturbation argument. However, their special
cases excludes networks with
bottleneck layers, commonly
used in the practice and studied in the literature~\cite{baldi1989,kawaguchi2016}.

We build on the work in~\cite{laurent2018} and further develop the
technique to show that for general deep linear networks,
whether there are spurious local minima is reduced to the two layer case.
\begin{theorem}\label{thm:main}
Given any convex differentiable function $f:\R^{m\times n}\to\R$. For any $k\geq 2$, let $L_k(M_1, \ldots, M_k) = f(M_k\cdots M_1)$ where  $M_i\in \R^{d_i\times d_{i-1}}$ for $1\leq i\leq k$ with $d_k=m$ and $d_0=n$.
Let $d=\min_{0\leq i\leq k}d_i$. Define $L_2(A, B)=f(AB)$ for $A\in\R^{m\times d}, B\in\R^{d\times n}$.
Then $L_k$ has no spurious local minima iff $L_2$ has no spurious local minima.
\end{theorem}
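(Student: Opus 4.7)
For any index $j$ with $d_j = d$, set $A := M_k \cdots M_{j+1}$ and $B := M_j \cdots M_1$, so that $L_k(M_1,\dots,M_k) = f(AB) = L_2(A,B)$ and the image of the product map $(M_1,\dots,M_k) \mapsto M_k \cdots M_1$ coincides with the $m \times n$ matrices of rank at most $d$; in particular $\inf L_k = \inf L_2$. The easy direction ($L_k$ no spurious $\Rightarrow$ $L_2$ no spurious) follows by lifting: given a spurious local minimum $(A^*, B^*)$ of $L_2$, I would build $(M_1^*,\dots,M_k^*)$ with $M_j^* \cdots M_1^* = B^*$ and $M_k^* \cdots M_{j+1}^* = A^*$, choosing the intermediate factors as identity-padded matrices so that the product map is a submersion at the lift. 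By the implicit function theorem, any sequence $(A_n, B_n) \to (A^*, B^*)$ witnessing non-local-minimality in $L_2$ lifts to a sequence of factorizations near $(M_i^*)$, transferring spuriousness to $L_k$.

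The hard direction is the content of the theorem. Given a local minimum $(M_1^*,\ldots,M_k^*)$ of $L_k$ and the associated $A^*, B^*$, the key claim is that $(A^*, B^*)$ is a local minimum of $L_2$. Granted the claim, the $L_2$ hypothesis makes $(A^*, B^*)$ a global minimum of $L_2$, and $\inf L_k = \inf L_2$ makes $(M_1^*,\ldots,M_k^*)$ a global minimum of $L_k$. To prove the claim, I would argue by contradiction: if $(A_n, B_n) \to (A^*, B^*)$ with $L_2(A_n, B_n) < L_2(A^*, B^*)$, I would lift each $(A_n, B_n)$ to a nearby factorization and obtain a sequence contradicting local minimality of $L_k$. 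In the non-degenerate regime where $A^*$ and $B^*$ both have full rank $d$, every intermediate subproduct $M_k^* \cdots M_{i+1}^*$ and $M_i^* \cdots M_{j+1}^*$ inherits full rank $d$, so the product map is a submersion at $(M_1^*,\ldots,M_k^*)$, and the implicit function theorem delivers the lift.

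The main obstacle is the degenerate case in which $A^*$ or $B^*$ is rank-deficient and the product map fails to be a submersion. Here I would invoke a rank-one perturbation argument in the spirit of~\cite{laurent2018}: perturbing $M_i^* \mapsto M_i^* + t u v^T$ alters the full product $M_k^* \cdots M_1^*$ by a rank-one term, and local minimality along such one-parameter families pins down linear relations involving the gradient of $f$ at $M_k^* \cdots M_1^*$. My plan is to iteratively apply rank-one moves that preserve $A^* B^*$ (hence the value of $L_k$, so local minimality is maintained) while raising the ranks of the sub-factors, eventually reaching a representative factorization of the same local-minimum value at which both $A$ and $B$ are full rank; the non-degenerate argument then closes the proof. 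The delicate point is orchestrating these rank-one moves so that each step is both available at the current factorization and strictly rank-raising for some sub-factor, which is presumably where the new refinement of Laurent and von Brecht's technique enters.
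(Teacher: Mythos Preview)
Your plan for the degenerate case has a genuine obstruction. You propose to apply rank-one moves that preserve the product $A^*B^*$ until you reach a factorization with both $A$ and $B$ of full rank $d$. But this target is unreachable whenever $\operatorname{rank}(A^*B^*)<d$: by Sylvester's inequality, if $\tilde A\in\R^{m\times d}$ and $\tilde B\in\R^{d\times n}$ both have rank $d$, then $\operatorname{rank}(\tilde A\tilde B)\ge \operatorname{rank}(\tilde A)+\operatorname{rank}(\tilde B)-d=d$, so any product-preserving path can never land at a full-rank pair. More concretely, once one factor attains rank $d$ it acquires a one-sided inverse, which freezes the other factor under the constraint $\tilde A\tilde B=A^*B^*$; thus your iterative scheme necessarily stalls with at least one factor rank-deficient. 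Since nothing in your outline rules out $\operatorname{rank}(A^*B^*)<d$ at a local minimum, the degenerate branch of your argument does not close.

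The paper handles this branch by a different mechanism that you are missing. It first applies the no-bottleneck theorem of Laurent--von~Brecht separately to each half of the network (freezing $B$ and optimizing over $M_{j+1},\dots,M_k$, then symmetrically), which yields that $(A,B)$ is a \emph{critical point} of $L_2$, i.e.\ $f'(AB)B^T=0$ and $A^Tf'(AB)=0$. The rank-one perturbations are then used not to raise ranks but to force a contradiction: assuming $A$ is rank-deficient and $f'(AB)\neq 0$, one builds product-preserving perturbations $\wt M_1,\dots,\wt M_j$ (hence still local minima, hence still satisfying $f'(AB)\wt B^T=0$) whose resulting $\wt B$ has a row outside $\ker f'(AB)$. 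This forces $f'(AB)=0$ in the degenerate case, and convexity of $f$ immediately gives global optimality---sidestepping the need to ever produce a full-rank representative. Your non-degenerate submersion argument and your easy direction are fine, but the heart of the proof is this criticality-plus-contradiction step, not a rank-raising procedure.
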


We emphasize that in the above theorem, $f$ depends on both
the data and the loss function. Hence the reduction is instance specific and
does not depend on the global property of a family of loss functions. Such reduction to two layer network greatly simplifies the
study on the existence of spurious local minima in deep linear
networks. 

To prove Theorem~\ref{thm:main}, we show a key structural property of
local minima, namely, any spurious local minimum, when ``broken'' at the
bottleneck, must have full rank. When specialized to two layer case,
it basically implies that a spurious local minimum cannot be rank-deficient,
immediately covering such cases which would otherwise require onerous analysis
such as those in~\cite{baldi1989,zhou2018}.
\begin{theorem}\label{thm:two}
With the same notation as in Theorem~\ref{thm:main}, if $A,B$ is a spurious local minimum of $L_2$, then both $A,B$ have full rank, i.e. rank $d$.
\end{theorem}

As an application of Theorem~\ref{thm:main}, for quadratic loss, Theorem~\ref{thm:main}, together with~\cite{baldi1989},
immediately implies the main result in~\cite{kawaguchi2016}. We can further remove all
the assumptions needed in~\cite{baldi1989,kawaguchi2016}, using the result of~\cite{zhou2018} (Theorem 2(1)), hence providing the first unconditional proof of the non-existence of spurious local minima in deep linear networks for quadratic loss functions. Below $\|\cdot\|$ denotes the Frobenius norm.
\begin{corollary}\label{cor:square}
For any $X\in \R^{d_0\times n}, Y\in\R^{d_k\times n}$, let $L(M_1,\ldots,M_k)=\|M_k\cdots M_1X-Y\|^2$ . Then $L$ has no spurious local minima.
\end{corollary}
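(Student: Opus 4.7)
The plan is to recognize the least squares loss as the composition of a convex differentiable function with the matrix product, so that Theorem~\ref{thm:main} reduces the problem to the two-layer case, which is then handled by an existing result.

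First, define $f: \R^{d_k\times d_0}\to\R$ by $f(M) = \|MX - Y\|^2$. This is a convex quadratic in $M$, hence convex and differentiable. Since $L(M_1,\ldots,M_k) = f(M_k\cdots M_1)$, the loss $L$ fits precisely the hypotheses of Theorem~\ref{thm:main}, with $m=d_k$, $n=d_0$, and $d=\min_{0\leq i\leq k} d_i$.

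By Theorem~\ref{thm:main}, $L$ has no spurious local minima if and only if the two-layer loss $L_2(A,B) = f(AB) = \|ABX - Y\|^2$, with $A\in\R^{d_k\times d}$ and $B\in\R^{d\times d_0}$, has no spurious local minima. So the entire question collapses to the two-layer quadratic setting. At this point I would invoke Theorem~2(1) of~\cite{zhou2018}, which states that $\|ABX - Y\|^2$ has no spurious local minima for arbitrary $X$ and $Y$ and arbitrary width $d$, with no assumptions on $XX^\top$, $YX^\top$, or the singular value structure. Combining this with the previous step yields the claim.

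The only real step of substance is matching the two-layer statement used here to the hypotheses of the Zhou--Liang result; everything else is an immediate application of Theorem~\ref{thm:main}. I would expect the main subtlety (if any) to be dimensional: verifying that the bottleneck dimension $d$ is allowed to be smaller than $\min(d_k,d_0)$ in the cited two-layer theorem. Since~\cite{zhou2018} treats all widths, this causes no difficulty, and the proof is complete.
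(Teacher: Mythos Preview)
Your proposal is correct and matches the paper's own justification exactly: the paper simply remarks (in the paragraph preceding the corollary) that Theorem~\ref{thm:main} reduces the claim to the two-layer quadratic case, which is then settled by Theorem~2(1) of~\cite{zhou2018} without any assumptions on $X,Y$. Your identification of $f(M)=\|MX-Y\|^2$ as the convex differentiable outer function and your note about the bottleneck dimension being covered by~\cite{zhou2018} are precisely the only details needed.
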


Theorem~\ref{thm:main} can be further generalized to ``multi-tower'' linear networks. Define a multi-tower linear
network as the sum of multiple deep linear networks (towers), i.e. $M_{1k_1}\cdots M_{11} + \ldots + M_{sk_s}\cdots M_{s1})$,
where $M_{ij} \in \R^{d_{i,j}\times d_{i,j-1}}$ with $d_{i,k_i}=m$ and $d_{i,0}=n$.
For $1\leq i\leq s$, let $b_i=\min_j d_{i,j}$ denote the bottleneck size of each tower $i$. Write $b=\sum_i b_i$. 
\begin{corollary}\label{cor:multi}
For any differentiable convex loss $f$, a multi-tower linear network has no spurious local minima iff the linear network $AB$, where $A\in\R^{m\times b}$, $B\in\R^{b\times n}$, has no spurious local minima. Moreover, if $b \geq m, n$, there is no spurious local minima in this network.
\end{corollary}

\paragraph{Overview of the proof.} The interesting case is when the bottleneck, i.e. the
narrowest layer, is a middle layer (otherwise it is already covered by~\cite{laurent2018}). In this
case, we split the network at the bottleneck into two parts
and regard it as a two ``super-layer''
network where each super-layer is parameterized as a product of multiple layers.
We first observe that by applying the result in~\cite{laurent2018}, any local minimum
is a critical point of the two layer network. Further we show that, unless the solution
is already a global optimum, the multi-layer parameterization of each super-layer is
``non-degenerate'' so we can always
perturb a critical point locally at the super-layer level to reduce it to the
two layer case. The main technical part is the non-degeneracy property, which
we prove by developing a new rank one perturbation argument motivated by those in~\cite{laurent2018}.

\subsection{Related work}

In \cite{baldi1989}, it is shown that, under mild assumptions on data,
two layer linear network with quadratic
loss has no spurious local minima. This is probably the first positive result on this long line of
investigation. Kawaguchi~\cite{kawaguchi2016} showed that it holds for deep linear network
too. The tour de force proof in~\cite{kawaguchi2016} works by examining the Hessian using powerful
tools from the matrix theory. There have been much subsequent work to simplify and generalize the result. For example,
\cite{lu2017depth} came up with a different argument. \cite{yun2018,yun2019} showed simpler
arguments for special cases and considered more general non-linear networks. \cite{hardt2017}
showed that under certain assumptions, there might not even be stationary point in the deep
linear residual network. \cite{venturi2018spurious} defined a notion of spurious valleys
and showed that for quadratic losses, there is no spurious valley in deep linear networks. \cite{venturi2018spurious} was able to remove all the assumption in~\cite{baldi1989} under this weaker notion. The mild assumption in~\cite{baldi1989}, which was also needed in Kawaguchi's proof, was
removed by~\cite{zhou2018}, which leads to our Corollary~\ref{cor:square}.

\cite{laurent2018} considers the special case of linear networks with the narrowest layer on the either end. It uses a novel perturbation argument to show that for any convex differentiable loss,
there is no spurious local minimum in such network. However, the special case considered in~\cite{laurent2018} excludes networks through low rank approximation such as auto-encoders. But it is really the intuitive yet powerful result in~\cite{laurent2018} that motivated this work.

There have been recent studies on the gradient descent convergence on the deep linear
networks~\cite{arora2018a,arora18,bartlett2018}. It has been shown~\cite{arora18} that,
under certain conditions, increasing the depth of linear networks can speed up the
convergence, which is another positive property of deep linear networks.

There have been much work~\cite{x3,x4,x5,x7,x2,x1,x6} on studying the optimization landscape and convergence of non-linear networks. They focus mostly on shallow networks with over-parameterized wide layers.

\section{Preliminaries}

We define notations used through the paper. We state some simple facts and the main theorem 
from~\cite{laurent2018} which we need in our proof.

\paragraph{Deep linear networks.}
Denote by $\R^{m\times n}$ all the matrices with $m$ rows
and $n$ columns. For $1\leq i\leq k$, let $M_i \in \R^{d_i \times d_{i-1}}$ . For $i\geq j$, denote by
$M_i \cdots M_j$ the matrix product of $M_i \cdot M_{i-1} \ldots \cdot M_j$. A (deep) linear
network with parameters $M_1, \cdots, M_k$ is defined as $\Phi(x) = M_k \cdots M_1 x$.
We call $k$ the depth of the network and $d_0, d_k$ the input and the output dimensions, respectively.
Define $d=\min_{0\leq i\leq k} d_i$ be the narrowest width. We say a network has
a bottleneck if both $d_0>d$ and $d_k>d$.
A multi-tower linear network is defined as the sum of multiple linear networks (towers) with the same input and output dimensions. 

\paragraph{Empirical loss.}
Given training data $D$ which consist
of examples of pairs of $x, y$ where the input feature vector $x\in\R^{d_0}$, and the label $y$ in some arbitrary set,
we wish to minimize the total loss:
\[ L(M_1, \cdots, M_k; D) = \sum_{(x, y)\in D} f_y (\Phi(x)) = \sum_{(x, y)\in D} f_y(M_k \cdots M_1 x)\,.\]

Define $f(A) = \sum_{(x, y)\in D} f_y(Ax)$. Then $L(M_1, \cdots, M_k; D) = f(M_k \cdots M_1)$. 
If $f_y$'s are all convex differentiable functions\footnote{In practice, $f_y$'s are typically convex. They are usually differentiable, and if not, can be smoothly approximated. For example the hinge loss can be approximated by the modified Huber loss~\cite{zhang2004solving}.}, then clearly $f$ is convex differentiable too. Below we omit
$D$ and consider the loss function $L:\R^{d_1\times d_0} \times \ldots \times \R^{d_k \times d_{k-1}}\to\R$
where $L(M_1, \cdots, M_k) = f(M_k \cdots M_1)$ for some $f:\R^{d_k\times d_0}\to \R$.

\paragraph{Derivative.} Denote by $\pder{L}{M}$ the matrix
form of the partial derivative of $L$ with respect to $M$.  If $L$ has only
one variable $M$, we write $L'=\pder{L}{M}$. For simplicity,
we sometimes abuse the notation by using the same symbol for the variable and the
value and omit the value. If $L(X, Y) = f(XY)$, by the chain rule, $\pder{L}{X}=f'(XY)Y^T$ and
$\pder{L}{Y}=X^Tf'(XY)$.

\paragraph{Local minimum.}
For any loss function $L$, $M_1, \ldots, M_k$ is a local minimum of $L$ if there exists an open ball $B$,
in Frobenius norm, centered at $M_1, \ldots, M_k$ such that $L(M_1, \ldots, M_k)\leq L(M_1', \cdots, M_k')$ for any
$(M_1', \cdots, M_k') \in B$. A local minimum is called \emph{spurious} if it is not a global minimum.
If $L$ is differentiable, then any local minimum is a critical point of $L$. In particular,
if $L(M_1, \cdots, M_k) = f(M_k \cdots M_1)$, then $M_1, \cdots, M_k$ satisfy
that $\pder{L}{M_i} = (M_k\cdots M_{i+1})^Tf'(M_k\cdots M_1) (M_{i-1} \cdots M_1)^T = 0$

\bigskip

We need the following theorem from~\cite{laurent2018}:
\begin{theorem}\label{thm:all}
Let $L_k(M_1, \ldots, M_k) = f(M_k\cdots M_1)$ where $f:\R^{d_k \times d_0}\to\R$ is a convex differentiable function.
If there is no bottleneck, i.e. $d_0$ or $d_k=\min_{0\leq i\leq k} d_i$, then any local minimum of $L_k$ is a global minimum of $f$.
\end{theorem}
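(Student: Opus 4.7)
The plan is to show that at any local minimum $(M_1,\ldots,M_k)$ of $L_k$, the gradient $G := \nabla f(W)$ with $W := M_k\cdots M_1$ must vanish. Since $f$ is convex, $G=0$ makes $W$ a global minimum of $f$; moreover, when $d_0 \le d_i$ for all $i$ one can exhibit any target in $\R^{d_k\times d_0}$ as a product (pad the intermediate layers with an embedded identity and absorb the target into $M_k$), so every value of $f$ is attainable, and a global minimum of $f$ is a global minimum of $L_k$. By replacing $f$ with $f(\cdot^T)$ and reversing the network, the case $d_k=\min_i d_i$ reduces to the case $d_0 = \min_i d_i$, which we assume henceforth.

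The first step is the critical-point analysis. Writing $P := M_{k-1}\cdots M_1 \in \R^{d_{k-1}\times d_0}$ and $Q := M_k\cdots M_2 \in \R^{d_k\times d_1}$, the equations $\nabla_{M_k}L_k = GP^T = 0$ and $\nabla_{M_1}L_k = Q^T G = 0$ say that $\mathrm{row}(P)\subseteq \ker G$ in $\R^{d_0}$ and $\mathrm{col}(G)\subseteq \mathrm{col}(Q)^{\perp}$ in $\R^{d_k}$. If $P$ has full column rank $d_0$, then $\mathrm{row}(P) = \R^{d_0}$ forces $G=0$ and we are done; symmetrically if $Q$ has full row rank $d_k$. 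Thus it suffices to analyze the residual case where both $P$ and $Q$ are rank-deficient.

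The residual case is handled by a rank-one second-order perturbation on two distinct layers, following the strategy of~\cite{laurent2018}. For a pair $1\le i<j\le k$, set $M_i\mapsto M_i+\epsilon u v^T$ and $M_j\mapsto M_j+\delta p q^T$, the other layers fixed. With $Q_i := M_k\cdots M_{i+1}$, $P_i := M_{i-1}\cdots M_1$, and $T_{i,j} := M_{j-1}\cdots M_{i+1}$, the perturbed product is
\begin{equation*}
W' = W + \epsilon\, Q_i u v^T P_i + \delta\, Q_j p q^T P_j + \epsilon\delta\,(q^T T_{i,j} u)\, Q_j p\, v^T P_i .
\end{equation*}
Choosing $u\in \ker(Q_i)$ and $q\in \ker(P_j^T)$ kills both first-order terms in $W'-W$, so $W'-W$ is a single rank-one matrix of magnitude $O(\epsilon\delta)$. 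The critical-point equations already make the first-order effect on $f$ vanish, leaving $\langle G, W'-W\rangle = \epsilon\delta\,(q^T T_{i,j} u)(p^T Q_j^T G P_i^T v)$, with a Hessian error of only $O(\epsilon^2\delta^2)$. Arranging the leading coefficient to be strictly negative (by flipping signs) produces a strictly smaller value of $L_k$, contradicting local minimality.

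The main obstacle is to guarantee that, whenever $G\neq 0$ and both $P,Q$ are rank-deficient, a compatible tuple $(i,j,u,v,p,q)$ meeting all four non-vanishing conditions actually exists. For $i=1, j=k$ one has $Q_j^T G P_i^T = G\neq 0$, so $p^T G v\neq 0$ is free; the crux is producing $u\in\ker Q_1$ and $q\in\ker P_k^T$ with $q^T T_{1,k} u \neq 0$. This is where the hypothesis $d_0 = \min_i d_i$ has to be used: it provides the dimensional slack in the appropriate kernels and in the range of $T_{1,k}$. In pathological configurations where this pair $(1,k)$ degenerates, one must argue that some intermediate pair $(i,j)$ works instead, exploiting the layered chain structure. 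Isolating the right linear-algebra invariant that rules out a simultaneous conspiracy of all these bilinear forms, given $G\neq 0$, is the technical heart of the proof.
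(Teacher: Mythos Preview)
First, note that the paper does not itself prove this theorem: it is quoted verbatim from Laurent and von~Brecht~\cite{laurent2018}, and the paper only \emph{uses} their technique later, in the proof of Lemma~\ref{lem:full}. That technique is quite different from yours. Rather than seeking a second-order descent direction, Laurent--von~Brecht perturb a single layer by a rank-one term lying in the left null space of the upstream product, so that the overall product $M_k\cdots M_1$ (and hence the value of $L_k$) is \emph{unchanged}; this manufactures an entire family of local minima, and one then imposes the first-order condition $\nabla_{M_i}L_k=0$ across that family to force $\nabla f(W)=0$. The paper's proof of Lemma~\ref{lem:full} displays exactly this mechanism: set $\wt M_i=M_i+w_iv_i^T$ with $M_k\cdots M_{i+1}w_i=0$, observe the value is preserved, and then inductively choose the $v_i$'s so that a row of $\wt M_j\cdots\wt M_1$ escapes the kernel of $f'(AB)$, contradicting the gradient condition.

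Your route is a genuine alternative in spirit, but as written it has a real gap. The entire argument hinges on producing, whenever $G\neq 0$ and $P,Q$ are both rank-deficient, a pair $(i,j)$ and vectors $u\in\ker Q_i$, $q\in\ker P_j^T$ with both $q^T T_{i,j} u\neq 0$ and $Q_j^T G P_i^T\neq 0$. For the natural choice $(i,j)=(1,k)$ this demands $T_{1,k}(\ker Q)\nsubseteq \mathrm{col}(P)$, which fails, for instance, whenever $\ker Q=\ker T_{1,k}$ (equivalently, $M_k$ is injective on $\mathrm{col}(T_{1,k})$); you acknowledge this and defer to ``some intermediate pair $(i,j)$'', but give no argument that such a pair exists, calling it ``the technical heart of the proof.'' That is precisely the step that needs to be supplied, and it is not clear your framework yields it without essentially re-deriving the null-space propagation idea above. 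A minor side remark: your Taylor estimate $O(\epsilon^2\delta^2)$ presumes a Hessian bound, whereas $f$ is only assumed differentiable; the correct remainder is $o(\epsilon\delta)$, which is still enough for the sign argument, so this is cosmetic rather than fatal.
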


\section{Proofs}

With the above preparation, we will now prove Theorem~\ref{thm:main}.
If the network we consider has no bottleneck, i.e. $d=d_k$ or $d=d_0$, then Theorem~\ref{thm:all} 
immediately implies that all the local minima for $L_k$ are global minima of $f$ so the statement is vacuously true.
Below we consider the case when $d=d_j$ for some $0<j<k$. 
Let $A=M_k\cdots M_{j+1}$ and $B=M_j\cdots M_1$. The following is the main technical claim of the paper.
\begin{lemma}\label{lem:full}
If $M_1, \ldots, M_k$ is a local minimum, then either $f'(AB)=0$ or $A,B$ both have rank $d$.
\end{lemma}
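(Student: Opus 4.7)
The plan is to split at the bottleneck, derive two-layer critical-point conditions via Theorem~\ref{thm:all}, and then use a joint rank-one perturbation to rule out rank-deficient local minima unless $f'(AB)=0$. Freezing $A$, the sub-network $(M_1,\ldots,M_j)$ has width sequence $d_0,\ldots,d_j=d$; since $d$ is the global minimum width, $d_j=d$ is the narrowest layer of this sub-network and sits at an end, so there is no bottleneck. The sub-loss $f(A\cdot M_j\cdots M_1)$ is convex in $B=M_j\cdots M_1$, so Theorem~\ref{thm:all} makes every local minimum a global minimum of $B\mapsto f(AB)$. Since the full local minimum restricts to a local minimum after freezing $A$, we obtain $A^T f'(AB)=0$; symmetrically $f'(AB)B^T=0$.

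For the main argument, assume by symmetry $\mathrm{rank}(A)<d$, and suppose for contradiction that $f'(AB)\neq 0$. Pick a unit vector $u\in\ker(A)$, and consider the joint perturbation $A\to A+\e wu^T$ and $B\to B+\delta uv^T$, with $w$ and $v$ to be supplied by single-factor updates described below. Using $Au=0$,
\[
(A+\e wu^T)(B+\delta uv^T) = AB + \e wu^T B + \e\delta wv^T.
\]
The condition $f'(AB)B^T=0$ forces the $\e$-linear contribution $\langle f'(AB), wu^T B\rangle = u^T B f'(AB)^T w$ to vanish, so expanding $f$ to second order gives
\[
\Delta L_k = \e\delta\,v^T f'(AB)^T w + \tfrac{1}{2}\e^2\,\langle wu^T B,\, f''(AB)[wu^T B]\rangle + O(t^3).
\]
The leading $(\e,\delta)$-Hessian is $\left(\begin{smallmatrix}C & \alpha\\ \alpha & 0\end{smallmatrix}\right)$ with $\alpha=v^T f'(AB)^T w$ and $C\geq 0$ by convexity; its determinant is $-\alpha^2$, and positive semidefiniteness at the local minimum forces $\alpha=0$.

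To execute these perturbations through single factors, I would update $M_{j+1}\to M_{j+1}+\e pu^T$, yielding $w=Qp$ with $Q=M_k\cdots M_{j+2}$, so $w$ ranges over $\mathrm{col}(Q)$; symmetrically $M_j\to M_j+\delta uq^T$ yields $v=B_1^T q$ with $B_1=M_{j-1}\cdots M_1$, so $v$ ranges over $\mathrm{row}(B_1)$. The previous paragraph's conclusion then reads $v^T f'(AB)^T w=0$ for all $v\in\mathrm{row}(B_1)$, $w\in\mathrm{col}(Q)$. In the two-layer case $k=2,j=1$ the empty products $B_1,Q$ span everything and we immediately obtain $f'(AB)=0$. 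In general, I would enlarge the reachable pairs $(v,w)$ by perturbing deeper layers (say $M_{j-1}$ and $M_{j+2}$) with similarly structured rank-one updates, recursing into any rank-deficient intermediate factorizations, until the accumulated bilinear relations force $f'(AB)=0$.

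The main obstacle, which is where the paper's new rank-one perturbation argument presumably lives, is completing this third stage in full generality: systematically reducing the family of relations $v^T f'(AB)^T w=0$ to $f'(AB)=0$ when the intermediate products are themselves rank-deficient and the subspaces $\mathrm{row}(B_1), \mathrm{col}(Q)$ do not span the ambient spaces. The first two stages are a clean application of Theorem~\ref{thm:all} and a bilinear Taylor expansion exploiting the single cancellation $Au=0$; the subtlety lies entirely in navigating the multi-layer factorization to extract enough rank-one directions.
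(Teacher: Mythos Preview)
Your first stage is exactly the paper's Lemma~\ref{lem:critical}: freezing one side and applying Theorem~\ref{thm:all} to the bottleneck-free sub-network yields $f'(AB)B^T=0$ and $A^Tf'(AB)=0$.

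The remainder diverges from the paper and, as you suspect, the gap is real. Your Hessian computation correctly gives $v^Tf'(AB)^Tw=0$ for $w\in\mathrm{col}(Q)$ and $v\in\mathrm{row}(B_1)$, i.e.\ $Q^Tf'(AB)B_1^T=0$. But pushing the recursion to deeper layers through this mechanism is awkward: perturbing $M_{j-1}$ instead of $M_j$ changes $B$ by $M_ju'\,(q'^{T}M_{j-2}\cdots M_1)$, and to preserve the cancellation you now need $M_ju'\in\ker A$; the reachable $v$'s do not obviously enlarge, and the constraints become entangled with the ranks of all intermediate products. There is no clean induction along this route.

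The paper avoids second-order analysis altogether. The idea you are missing is this: if $\mathrm{rank}(A)=r<d$, then for every $2\le i\le j+1$ the product $M_k\cdots M_i$ has rank at most $r<d\le d_{i-1}$, so there is a nonzero $w_{i-1}$ with $M_k\cdots M_i\,w_{i-1}=0$. The rank-one update $\wt M_{i-1}=M_{i-1}+w_{i-1}v_{i-1}^T$ therefore leaves the full product $M_k\cdots M_1$ \emph{unchanged}. Doing this for $i-1=1,\dots,j$ produces a whole family of points $(\wt M_1,\dots,\wt M_j,M_{j+1},\dots,M_k)$ with the same objective value, hence all local minima. Now apply your first-stage conclusion to every member of this family: $f'(AB)\wt B^T=0$ for every reachable $\wt B=\wt M_j\cdots\wt M_1$. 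Writing $V=\{v:f'(AB)v=0\}$, this says all rows of every such $\wt B$ lie in $V$. A short induction on the layer index then shows the $v_i$'s can be chosen so that $\wt B$ acquires a row outside $V$ (using only $f'(AB)\neq 0$, so $V\neq\R^{d_0}$), a contradiction.

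Conceptually: you extract one second-order relation at a single point, constrained by the column/row spaces of $Q$ and $B_1$; the paper instead exploits the flat directions created by rank deficiency to move to nearby local minima and harvests a first-order equation at each one. Because these invariant perturbations are available at every layer $1,\dots,j$, the induction closes --- precisely where your approach stalls. (A minor aside: your Hessian matrix invokes $f''$, whereas $f$ is only assumed differentiable; this is repairable by fixing a small $\delta$ with $\delta\alpha<0$ and varying $\e$, but it is moot given the main gap.)
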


We first show that the above lemma implies Theorem~\ref{thm:main}. 
\begin{proof}\textbf{(Theorem~\ref{thm:main})}
If $L_2$ has spurious local minima, then $L_k$ has too.
Since $d_j=\min_{0\leq i\leq k} d_i$, 
for any $A\in\R^{d_{k}\times d_j}$ and $B\in \R^{d_j\times d_0}$, we
can easily construct matrices
$M_i\in \R^{d_i \times d_{i-1}}$ for $1\leq i\leq k$ such
that $M_k\cdots M_{j+1} = A$, and $M_j\cdots M_1=B$.
If $A,B$ is a spurious local minimum of $L_2$, then clearly $M_1, \cdots, M_k$
is a spurious local minimum of $L_k$.

The other direction is implied by Lemma~\ref{lem:full}. This implication has been used
before multiple times~\cite{kawaguchi2016,lu2017depth,yun2018}. Here we include the easy proof
for completeness. Suppose
that $M_1, \cdots, M_k$ is a local minimum of $L_k$. Then by Lemma~\ref{lem:full},
either $f'(AB)=0$ or $A,B$ both have rank $d$.
If $f'(AB)=0$,
then $AB=M_k\cdots M_1$ is a global minimum of $f$ because $f$ is convex.
Hence $M_1, \ldots, M_k$ is a global minimum of $L_k$ too.

In the other case, $A$ and $B$ both have full rank $d$. We show that any local perturbation to
$A$ (resp. $B$) can be performed by local perturbation to $M_k$ (resp. $M_1$).
If $A=M_k\cdots M_{j+1}$
has rank $d$, then $A_1=M_{k-1}\cdots M_{j+1}\in
\R^{d_{k-1}\times d}$ has rank $d$ too because $d\leq d_{k-1}$. Then for any 
$D\in \R^{d_k\times d}$, there exists $D_1\in \R^{d_k\times d_{k-1}}$ such that 
$D_1 A_1 = D$. Hence $(M_k+D_1) A_1=M_kA_1 + D_1
A_1 = A + D$. This implies any local perturbation to $A$ can be done
through local perturbation to $M_1$. More precisely, there exists a
constant $c>0$, such that for any $D\in \R^{d_k \times d}$, there
exists $D_1\in R^{d_k \times d_{k-1}}$ with $\|D_1\|\leq c\|D\|$
and $D_1 A_1=D$. Same is true for $B$. This implies that if
$L_k(M_1, \ldots, M_k)$ is minimum in an open ball of radius $\delta$
centered at $M_1, \ldots, M_k$, then $L_2(A,B)$ is minimum in an
open ball of radius $\delta/c$ centered at $A, B$. 
Hence if $M_1, \ldots, M_k$ is a local minimum of $L_k$,
then $A,B$ is a local minimum of $L_2$. If $L_2$ has no
spurious local minima, $A,B$, hence $M_1, \ldots, M_k$, is a global minimum.
\end{proof}

In the above proof, we actually showed that if $M_1, \ldots, M_k$ is a spurious local minimum
of $L_k$, then $A=M_k\cdots M_{j+1},B=M_j\cdots M_1$ is a spurious local minimum of $L_2$. That is,
every spurious local minima of $L_k$ can be directly mapped to a spurious local minimum
of $L_2$, hence the title of the paper.

Lemma~\ref{lem:full} directly implies Theorem~\ref{thm:two}.
\begin{proof}\textbf{(Theorem~\ref{thm:two})}
Consider the case of $k=2$ and $j=1$. Then we have $A=M_2\in\R^{d_2\times d_1}$ and $B=M_1\in\R^{d_1\times d_0}$ with $d_1 < \min(d_0, d_2)$. If $A,B$ is a spurious
local minimum of $L_2$, then $f'(AB)\neq 0$ because otherwise they would have been a global minimum of $f$.
By Lemma~\ref{lem:full}, we have that both $A,B$ are of rank $d_1$, i.e. they both have full rank because $d_1<d_0, d_2$.  
\end{proof}

To prove Lemma~\ref{lem:full}, we first observe that
\begin{lemma}\label{lem:critical}
If $M_1, \ldots, M_k$ is a local minimum of $L_k$, then $\pder{L_2}{A}(A,B)=0, \pder{L_2}{B}(A,B)=0$.
\end{lemma}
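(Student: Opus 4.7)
The plan is to split the $k$-layer network at the bottleneck layer $j$ and apply Theorem~\ref{thm:all} twice, once to each of the two sub-networks that produce $A$ and $B$, respectively. The key observation is that in each sub-network the bottleneck width $d=d_j$ sits at one of the two ends, so Theorem~\ref{thm:all} (together with the stronger statement noted right after it, that any local minimum of such a no-bottleneck network is a global minimum of the underlying convex function) applies.

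More concretely, I would fix $M_1,\ldots,M_j$, so $B=M_j\cdots M_1$ is fixed, and consider the function
\[
g_A(N_{j+1},\ldots,N_k) \;=\; L_2(N_k\cdots N_{j+1},\,B) \;=\; f\bigl(N_k\cdots N_{j+1}B\bigr).
\]
Since $A\mapsto L_2(A,B)=f(AB)$ is convex and differentiable in $A$, the function $g_A$ is an instance of the setup in Theorem~\ref{thm:all} with the convex loss $A\mapsto f(AB)$ and widths $d_j,d_{j+1},\ldots,d_k$. Because $d_j=d$ is the global minimum width, it is in particular the minimum of $\{d_j,\ldots,d_k\}$, and it sits at the input end, so there is no bottleneck. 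Next I would argue that $(M_{j+1},\ldots,M_k)$ is a local minimum of $g_A$: any sufficiently small perturbation of only these matrices gives a point inside the open ball on which $L_k$ is minimized, and $L_k$ on that restricted slice coincides with $g_A$. Theorem~\ref{thm:all}, in the sharpened form recalled in the text, then guarantees that $A=M_k\cdots M_{j+1}$ is a global minimizer of $A'\mapsto f(A'B)$. A global minimizer of a convex differentiable function is a critical point, so $\pder{L_2}{A}(A,B)=0$. The analogous argument with $M_{j+1},\ldots,M_k$ fixed (so $A$ is fixed) and $B'\mapsto f(AB')$ as the convex loss yields $\pder{L_2}{B}(A,B)=0$; here $d_j=d$ sits at the output end of the sub-network $M_1,\ldots,M_j$, so again there is no bottleneck.

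I do not expect any real obstacle here: the only things to check are (i) convexity and differentiability of the reduced loss in $A$ alone (and in $B$ alone), which is immediate from convexity of $f$ composed with a fixed linear map, and (ii) that the narrowest-width hypothesis of Theorem~\ref{thm:all} holds for both sub-networks, which is automatic from $d=d_j$ being at the boundary of each sub-network's width sequence. The delicate aspects of the lemma---the rank properties of $A$ and $B$---are deferred to Lemma~\ref{lem:full}; the present step only extracts the first-order information that a local minimum of $L_k$ is a critical point of the two-layer reformulation.
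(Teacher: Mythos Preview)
Your proposal is correct and follows essentially the same approach as the paper: fix one half of the network, view the other half as a bottleneck-free linear network over the induced convex loss $A\mapsto f(AB)$ (respectively $B\mapsto f(AB)$), note that the restriction of a local minimum is a local minimum, and invoke Theorem~\ref{thm:all} to conclude global optimality and hence vanishing gradient. The only difference is notational (the paper names the convex reduced loss $g_B$ and the composed network $\wt{L}_B$, whereas you call the composed network $g_A$), but the logic is identical.
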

\begin{proof}
Define $g_B(X)=L_2(X, B)=f(XB)$. Clearly $g_B$ is convex and differentiable too. Let
$\wt{L}_B(M_{j+1}, \ldots, M_k) = g_B(M_k \cdots M_{j+1})$.
If $M_1, \ldots, M_k$ is a local minimum of $L_k$, then $M_{j+1}, \ldots, M_k$ must be a local minimum of $\wt{L}_B$.
In addition, $d_j = \min_{0\leq i \leq k} d_i =\min_{j \leq i \leq k} d_i$, so there is
no bottleneck in $M_{j+1}, \ldots, M_k$. We apply Theorem~\ref{thm:all} to get that
$A=M_k\cdots M_{j+1}$ is a global minimum of $g_B$,
hence $\pder{L_2}{A}(A,B)=0$. Similarly $\pder{L_2}{B}(A,B)=0$.
\end{proof}

Now we prove the key technical claim of Lemma~\ref{lem:full}.
\begin{proof}\textbf{(Lemma~\ref{lem:full})}

We just need to show that if $f'(AB)\neq 0$, then $A,B$ must be of rank $d$.
Below we assume $f'(AB)\neq 0$. We will show that $A$ has rank $d$. For $B$, we can apply
the same argument to $g(X)=f(X^T)$.

Let $r$ denote the rank of $A$. We will derive contradiction by
assuming $r<d$. We first use an argument in~\cite{laurent2018}
to construct a family of local minima. Since $A=M_k\cdots M_{j+1}$ is
of rank $r<d$, for any $2\leq i\leq j+1$, $M_k \cdots M_i$ has rank at
most $r$. Since $r<d\leq d_{i-1}$, there exists nonzero $w_{i-1}\in \R^{d_{i-1}}$
such that $M_k \cdots M_i w_{i-1}=0$. Then for any $v_{i-1}\in \R^{d_{i-2}}$, we
have
\[M_k\cdots M_i(M_{i-1} + w_{i-1} v_{i-1}^T) = M_k \cdots M_{i-1}\,.\]

Now for any $v_1, v_2, \cdots, v_j$ where $v_i \in \R^{d_{i-1}}$, we claim that
\begin{equation}\label{eq:1}
M_k \cdots M_{j+1} (M_j + w_j v_j^T) \cdots (M_1 + w_1 v_1^T) = M_k \cdots M_1\,.
\end{equation}

This can be shown inductively for $i=j, \cdots, 1$.
\[M_k \cdots M_{j+1} (M_j + w_j v_j^T) \cdots (M_i + w_i v_i^T) = M_k \cdots M_i\,\]

Since $M=(M_1, \ldots, M_k)$ is a local minimum, it is the minimum in an open neighborhood of $M$. If we set $\|v_i\|$'s small enough so that $\wt{M} = (M_1+w_1 v_1^T, \cdots, M_j+w_j v_J^T, M_{j+1}, \cdots, M_k)$ is in a smaller neighborhood, then $\wt{M}$ is a local minimum too since $L_k(\wt{M}) = L_k(M)$. See Claim~1 in~\cite{laurent2018}
for a rigorous proof. 

Let $\wt{B} = \wt{M}_j\cdots \wt{M}_1$. Then by Lemma~\ref{lem:critical},
$\pder{L_2}{A}(A,\wt{B})=0$, i.e $\pder{f(AB)}{A}(A, \wt{B})=f'(A\wt{B})\wt{B}^T=0$.
Since $A\wt{B}=AB$, we have that for any $\wt{M}_1, \ldots, \wt{M}_j$ constructed above,
\begin{equation}\label{eq:x}
f'(AB)\wt{B}^T=0\,.
\end{equation}

For any matrix $M\in\R^{m_1\times m_2}$, denote by $M^{\ell}\in\R^{m_2}$ the $\ell$-th row vector
of $M$, and by $R(M)$ all the row vectors of $M$. Consider the linear subspace
\[V=\{v\in \R^{d_0}\,|\, f'(AB)v=0\,\}\,.\]

Then (\ref{eq:x}) implies that $R(\wt{B})\subseteq V$. We now show that we
can choose $v_i$'s for $1\leq i\leq j$, such that $\wt{B} = \wt{M}_j\cdots \wt{M}_1$
contains a row vector which is not in $V$ to reach a contradiction.

Let $i^\ast = \min \{i\,|\, R(M_i \cdots M_1) \subseteq V\}$. If $i^\ast=1$,
we choose a sufficiently small non-zero vector $v_1 \notin
V$. This can be done by our assumption that $f'(AB)\neq 0$.
Set $\wt{M}_1 = M_1 + w_1 v_1^T$.  Since $w_1\neq 0$, there exists $\ell$ such that
$w_{1\ell}\neq 0$. Then $\wt{M}_1^\ell=M_1^\ell + w_{1\ell} v_1$. By
$M_1^\ell\in V, v_1\notin V, w_{1\ell}\neq 0$, we have
$\wt{M}_1^\ell\notin V$ since $V$ is a linear subspace. Now assuming $i^\ast>1$. Suppose that
we have constructed $\wt{M}_i, \ldots, \wt{M}_1$, for some $i\geq i^\ast-1\geq 1$,
such that $R(\wt{M}_i\cdots\wt{M}_1)\nsubseteq V$. We show the construction
for $i+1$. If $R(M_{i+1}\wt{M}_i\cdots\wt{M}_1)\nsubseteq V$, then
we can simply set $\wt{M}_{i+1} = M_{i+1}$. Assume below that $R(M_{i+1}\wt{M}_i\cdots\wt{M}_1)\subseteq V$.
By inductive hypothesis $R(\wt{M}_i\cdots\wt{M}_1)\nsubseteq V$,
thus there exists say the $\ell$-th row vector $v$ of $\wt{M}_i\cdots\wt{M}_1$ not in $V$. 
Set $v_{i+1}$ as the $\ell$-th basis vector in $R^{d_i}$ so that $v_{i+1}^T \wt{M}_i\cdots\wt{M}_1=v^T$.
Now let $\wt{M}_{i+1} = M_{i+1} + w_{i+1}v_{i+1}^T$. Then
\begin{align*}
\wt{M}_{i+1}\wt{M}_i\cdots\wt{M}_1 &= M_{i+1}\wt{M}_i\cdots\wt{M}_1 + w_{i+1}v_{i+1}^T \wt{M}_i\cdots\wt{M}_1\\
&=M_{i+1}\wt{M}_i\cdots\wt{M}_1 + w_{i+1} v^T\,.
\end{align*}

Since $R(M_{i+1}\wt{M}_i\cdots\wt{M}_1)\subseteq V$
but $v\notin V$ and $w_{i+1}\neq 0$, by the same argument as for $i^\ast=1$, there
must exist a row vector in $\wt{M}_{i+1}\cdots\wt{M}_1$ which is not in
$V$. We have inductively constructed $\wt{B}=\wt{M}_j\cdots\wt{M_1}$
such that $R(\wt{B})\nsubseteq V$, contradicting to (\ref{eq:x}). Hence
$A$ must have rank $d$. This concludes the proof.
\end{proof}

Corollary~\ref{cor:square} immediately follows from Theorem~\ref{thm:main} and~\cite{zhou2018} (Theorem~2(1)). In the following, we prove Corollary~\ref{cor:multi}.
\begin{proof}\textbf{(Corollary~\ref{cor:multi})\quad}
If some tower has no bottleneck, we can fix all the parameters but this tower, we can then apply Theorem~\ref{thm:main} to show that
any local minimum is a global minimum. Hence the statement holds. Below we assume that each tower has a bottleneck with width $b_i$.

Suppose that we have a spurious local minimum $\M = (M_{11}, \cdots, M_{1k_1}, \cdots, M_{s1}, \cdots, M_{sk_s})$. Similar to
the proof of Theorem~\ref{thm:main}, we can break each tower $i$ as $A_i\in \R^{m\times b_i},B_i\in \R^{b_i \times n}$ at the bottleneck layer. Write $M=\sum A_i B_i$. Similarly we can show that either $f'(M)=0$ or all the $A_i, B_i$'s have full rank. Since $\M$ is a spurious local minimum, it must be the second case, i.e. all the $A_i, B_i$ have full rank. Now let $A=(A_1, \cdots, A_s)$ and $B=\begin{pmatrix} B_1 \\ \vdots \\ B_s \end{pmatrix}$. Then $A\in\R^{m\times b},B\in\R^{b\times n}$ where $b=b_1+\cdots+b_s$. By the same argument in the proof of Theorem~\ref{thm:main}, any perturbation of $A,B$ can be done through perturbation of $\M$, hence $A,B$ is a spurious local minimum for the single tower two layer network $AB$. If $b \geq m, n$, then Theorem~\ref{thm:all} implies that any local minimum is a global minimum of $f$.
\end{proof}

\section{Conclusion}

We have shown a non-degeneracy property of local minima in deep linear networks for general convex differentiable loss. This property
allows us to reduce the existence of spurious local minima in a deep (with depth $\geq 3$) linear network to the two layer linear network,
and, for two layer networks, to simplify analysis by removing the rank-deficient case. We show the
application to quadratic loss functions and the generalization to multi-tower deep linear networks. Our proof uses a novel perturbation argument and does not require any heavy mathematical machinery.

It would be interesting to study when there is no spurious local minima beyond the quadratic loss. By our result, we only need to consider the
two layer case. Another interesting question is whether similar phenomenon exists for non-linear networks.

\arxiv{\paragraph{\large{Acknowledgments}} The author would like to thank Walid Krichene for the careful review and suggestions to improve the presentation.}

\bibliography{dln}
\icml{\bibliographystyle{icml2019}}
\arxiv{\bibliographystyle{abbrv}}
\nips{\bibliographystyle{abbrv}}

\end{document}